\newtheorem{theorem}{Theorem}
\newcolumntype{C}{>{\centering\arraybackslash}X} 
\def\reals{\mathbb{R}}
\def\comp{\raise 1pt \hbox{$\scriptstyle\circ$}}
\def\argmin{\mathop{\rm argmin}\limits}
\def\minimize{\mathop{\rm min}\limits}
\def\st{\mathop{\rm subject\ to}}
\def\upto{{\raise 1pt \hbox{$\scriptstyle \,\nearrow\,$}}}
\def\downto{{\raise 1pt \hbox{$\scriptstyle \,\searrow\,$}}}
\def\tos{\rightrightarrows}
\newcommand\notsotiny{\@setfontsize\notsotiny\@vipt\@viipt}
\begin{document}

\title{A Globally Convergent Gradient-based Bilevel Hyperparameter Optimization Method}

\author{Ankur Sinha, Satender Gunwal, and Shivam Kumar
\thanks{A. Sinha, S. Gunwal, and S. Kumar are with the Centre for Data Science and AI, Indian Institute of Management Ahmedabad. ( e-mail: asinha@iima.ac.in, satender@iima.ac.in, and shivam.kumar@qi-cap.com, respectively.)} 
\thanks{ \textit{Corresponding author : Satender Gunwal}}
}

\markboth{}%
{Shell \MakeLowercase{\textit{et al.}}: A Globally Convergent Gradient-based Bilevel Hyperparameter Optimization Method}


\maketitle

\begin{abstract}
Hyperparameter optimization in machine learning is often achieved using naive techniques, such as random and grid search, that only lead to an approximate set of hyperparameters. Although techniques such as Bayesian optimization perform an intelligent search, it does not guarantee an optimal solution. A major drawback of most of these approaches is that as the number of hyperparameters increases, the search domain increases exponentially, thus increasing the computational cost. The hyperparameter optimization problem is inherently a bilevel optimization task, and some studies have attempted bilevel solution methodologies for solving this problem. However, these studies, too, suffer from a drawback, where they assume a unique set of model weights that minimize the training loss, which is generally violated by deep learning architectures. This paper discusses a gradient-based bilevel method for the continuous hyperparameter optimization problem, where these drawbacks have been addressed. The method guarantees convergence to the set of optimal hyperparameters that this study has theoretically proven. The idea is based on approximating the lower-level optimal value function mapping using Gaussian process regression to obtain a single-level formulation, which is then solved using the augmented Lagrangian method. We have performed an extensive computational study on the MNIST and CIFAR-10 datasets on multi-layer perceptron and LeNet architectures that confirms the efficiency of the proposed method. A comparative study against grid search, random search, Bayesian optimization, and HyberBand method on multiple hyperparameter problems shows that the proposed algorithm converges with lower computation and leads to models that generalize better on the testing set.
\end{abstract}

\begin{IEEEkeywords}
Bilevel Optimization, Hyperparameter Tuning, Machine Learning, Gaussian Process Regression
\end{IEEEkeywords}

\section{Introduction}
\IEEEPARstart{T}{raining} a machine learning model that performs better on unseen data requires hyperparameter tuning, which is a computationally expensive task. It is often done using non-exact methods where a search space is given to the model for identifying a good set of hyperparameters. Methods like grid search, used in practice, set a grid of hyperparameter values beforehand and evaluate models corresponding to each hyperparameter on a validation set. Random search \cite{random_search:James} is an alternative to grid search where randomly generated hyperparameters are used to generate models using the training set, which are then evaluated on the validation set. Bayesian optimization \cite{PredModel:Kuhn,BayesianHO:Snoek} is a more intelligent approach that carries out the hyperparameter optimization by assuming a prior. Bayesian optimization also samples hyperparameters in the search space, evaluates them, and updates the prior to form the posterior distribution. There is a significant body of literature on hyperparameter optimization in machine learning, see \cite{ML_Book:HO,HOReview:TongYu,HO:Yang,optML:Claudio}.
The problem of optimizing the hyperparameters is inherently a \textit{bilevel optimization} problem, as one can clearly observe that all the above methods create models corresponding to different hyperparameters, which entails solving multiple optimization problems for model training. The various models trained are then evaluated based on validation data, and the best performing model is chosen. Model training using the training data for a given hyperparameter is the lower level optimization task and the model selection based on validation performance is the upper level optimization task. The set of optimal hyperparameters prevents overfitting on the training data and results in a more generalizable model \cite{HO:Yang,GenBilevel:FBao}. For a review on bilevel optimization, refer to \cite{dempe2002foundations,bilevel_review:asinha}. 
Other state-of-the-art methods that follow the similar idea of probabilistic modelling as in Bayesian optimization include Sequential Model-based Global Optimization \cite{hutter2011sequential}, Gaussian Process Approach \cite{BayesianHO:Snoek,GPforML}, and Tree-structured Parzen Estimator Approach \cite{bergstra2011algorithms}. For most of these approaches, the computational time increases exponentially, and the performance deteriorates significantly as the number of hyperparameters increase \cite{GB_HO:Maclaurin}.

It is well-known that gradient-based methods can optimize a significantly large number of hyperparameters \cite{domke2012generic,HO_Gradient:Fabian,GB_HO:Maclaurin,Fu_et_al:2016}. Some aspects of our study fall into this category of hyperparameter optimization as we develop a gradient based approach that simultaneously optimizes the hyperparameters and model weights in a bilevel framework. Other studies in this area include \cite{bengio2000gradient} and \cite{HO_Gradient:Fabian}, which uses implicit function theory for the derivation of hyperparameter gradients. Another category of similar papers uses an iterative approach to approximate the reaction set mapping after solving a few lower level optimization problems \cite{franceschi2018bilevel, GB_HO:Maclaurin}. Most of the bilevel methods that are based on approximation of the reaction set mapping, i.e. the optimal set of model weights for a given hyperparameter, assume the existence of a unique optimum at the lower level that is often violated by deep learning structures. Recently, \cite{lorraine2018stochastic} and \cite{mackay2019self} proposed similar methods that locally approximate this mapping by estimating the hessian or its inverses, making their methods computationally intractable for a large number of model parameters.


Our proposed method relies on approximating the optimal value function (OVF) mapping that does not require the assumption of a unique optimum at the lower level. The optimal value function is approximated using a stochastic process model to convert the bilevel program into a constrained single level problem. Further, the constrained problem is reduced to an unconstrained optimization problem using the augmented Lagrangian method, which we solve using a gradient-based approach. \cite{Gradient_Based_Past} recently proposed a gradient-based bilevel approach for hyperparameter tuning where the optimal value function is approximated in one shot using Kriging approximation. In the present work, the optimal value mapping is improved iteratively using the newly obtained upper level solution after each augmented Lagrangian step. We also consider the standard error arising from the stochastic process model to ensure a tighter confidence interval of the approximation. Assuming that we have a global optimizer for unconstrained optimization problems, our method guarantees convergence to the globally optimal solution (i.e., the optimal hyperparameters and the corresponding model weights) to the bilevel problem. We have provided our algorithm's termination conditions and convergence results in this paper.

An extensive comparative study against the common hyperparameter optimization approaches, like grid search, random search, Bayesian optimization \cite{hutter2011sequential} and HyperBand \cite{hyperband} method on MNIST \cite{lecun2010mnist} and CIFAR-10 \cite{Krizhevsky09learningmultiple} datasets reveals that the proposed approach is able to search models with high generalization at significantly lower computations. The MNIST dataset has been used with the multi-layer perceptron architecture (MLP), and the CIFAR-10 dataset has been used with the convolutional neural network (CNN) architecture. In this study, we have presented results for problems involving up to 4 regularization parameters. The proposed method can easily handle a higher number of regularization hyperparameters without a deterioration in performance or a significantly higher requirement of computational resources. However, we have restricted ourselves to 4 hyperparameters as it would not be easy to draw comparisons against other methods as their computational requirements increase substantially with an increase in hyperparameters.

The paper is organized as follows. Section~\ref{section:2} provides a bilevel optimization framework for hyperparameter optimization in the context of machine learning. Further, it discusses some single level reduction approaches that can be used to solve such problems and their drawbacks. Section~\ref{section:3} discusses about the proposed bilevel approach and its termination and convergence proofs in detail. The results of our studies on MNIST and CIFAR-10 datasets are discussed in Section~\ref{section:4}. Section~\ref{section:5} provides the conclusions of our study.

\section{Bilevel Optimization Framework}\label{section:2}

\begin{table*}
\caption{central notations used in the paper}
\centering
\label{Table1:Notations}
\begin{tabularx}{500pt}{>{\hsize=.5\hsize\linewidth=\hsize}X >{\hsize=.5\hsize\linewidth=\hsize}X >{\hsize=2\hsize\linewidth=\hsize}X}
\toprule
\textbf{Category} & \textbf{Symbols Used} & \textbf{Description} \\ 
\midrule

\multirow{2}{*}{Datasets} & $D_{tr}$ & Training set; $\{(x_i,y_i)\}_{i = 1}^{n_1}$, with $n_1$ training examples\\ [3pt]
& $D_{val}$ & Validation set; $\{(x_i,y_i)\}_{i = 1}^{n_2}$, with $n_2$ validation examples\\
\midrule

\multirow{2}{*}{Variables} & $\lambda \in \mathbb{R}^n$ & Model hyperparameters (upper level variables)  \\ [3pt]
  & $\beta \in \mathbb{R}^m$ & Model parameters (lower level variables)\\
\midrule

Loss Function & $L(\beta; D)$ & Loss-function on a given example set $D$ with model parameters $\beta\in \mathbb{R}^m$\\
\midrule

Regularization & $\Theta(\lambda, \beta)$ & $\lambda\|\beta\|^2$ (L2 regularization)\\
\midrule

\multirow{2}{*}{Objective Functions} & $F(\beta)$ &  $L(\beta; D_{val})$; upper level objective ( validation Loss )\\ [3pt]
  &  $f(\lambda, \beta)$ &  $L(\beta; D_{tr}) + \Theta(\lambda, \beta)$; lower level objective ( regularized training loss ) \\
\midrule

Reaction Set &  $\Psi(\lambda): \mathbb{R}^{n} \rightarrow \mathbb{R}^{m}$ & $\Psi(\lambda)$ represents optimal lower level solutions for any given upper level variable $\lambda \in \mathbb{R}^n$ \\
\midrule

Optimal Value Function & $\phi(\lambda): \mathbb{R}^n\longrightarrow \mathbb{R}$ & Optimal objective value of the lower level problem for any given upper level variable $\lambda \in \mathbb{R}^n$ \\
\bottomrule
\end{tabularx}
\end{table*}

In bilevel optimization problems, there are two levels of optimization, generally referred to as upper (or outer) level optimization and lower (or inner) level optimization. The lower level problem is a nested optimization task that acts as a constraint to the upper level problem. Note that the presence of multiple optimal solutions at the lower level may give rise to two different kinds of formulations; namely, optimistic or pessimistic in the bilevel context. In this paper, we are considering the optimistic formulation, i.e. in case there are two or more set of optimal model parameters for a given hyperparameter that lead to a similar training loss, then the one that is better in terms of validation loss is preferred. Below we provide the optimistic formulation for the bilevel hyperparameter optimization problem.
Refer to Table~\ref{Table1:Notations} for the notations used throughout the paper.

\begin{align}\label{mod:originalBilevel}
\begin{split}
\minimize_{\lambda,\beta} \quad & F(\beta) \\
\st\quad  & \\
 & \hspace{-12mm} \beta \in \argmin_{\beta} \{f(\lambda,\beta)\}
 \end{split}
\end{align}

Here,  $F: \mathbb{R}^{m} \rightarrow \mathbb{R}$ and $f: \mathbb{R}^{n} \times \mathbb{R}^{m} \rightarrow \mathbb{R}$ are the upper and lower level objective functions, respectively, and $\lambda\in \mathbb{R}^n$ and $\beta\in \mathbb{R}^m$ are the corresponding variables. 

A Bilevel formulation in hyperparameter optimization minimizes the training set loss at the lower level and validation set loss at the upper level. 
The upper level validation loss will be minimized only for the set of model parameters that minimize the training set loss. The upper level objective is $F(\beta) = L(\beta;D_{val})$, where $D_{val}$ is the validation set and $L$ is the loss function. We use cross-entropy loss for our experiments, along with $L2$ regularization, also known as \textit{weight\ decay} in the context of neural networks. The lower level objective is as follows:
\begin{align}\label{Obj:Lower}
f(\lambda, \beta) = L(\beta; D_{tr}) + \Theta(\lambda, \beta)    
\end{align}
where loss, $L$, is defined on the training set, $D_{tr}$, and $\Theta(\lambda, \beta) = \lambda\|\beta\|^2$ is the $L2$ regularization term. The hyperparameter $\lambda\geq 0$ controls the penalty on the training loss function, which improves the learning of the model for unseen data by preventing overfitting on the training examples. 
Next, we discuss some common methods from the bilevel optimization literature that are employed to solve (\ref{mod:originalBilevel}) using single level reduction, and the drawbacks of these methods in the context of hyperparameter optimization in machine learning.

\subsection{Single Level Reduction using Reaction Set Mapping} 

The reaction set mapping in bilevel optimization provides the optimal $\beta$ with respect to the lower level problem for the corresponding $\lambda$. If there are multiple global optimal $\beta$s for a given $\lambda$, the mapping is expected to return all these solutions. The mapping is given by, \begin{align}\label{set_valued_map}
\Psi(\lambda) = \argmin_{\beta}\{f(\lambda,\beta)\}
\end{align} 
Single level reduction of the bilevel program depends on the approximation of (\ref{set_valued_map}). However, this mapping is not readily available, so we need to solve the lower-level problem for various values of the upper level variable vector, $\lambda$, to obtain (\ref{set_valued_map}). As there can be more than one optimal solutions at the lower level, the $\Psi(\lambda):\reals^m\tos\reals^n$ mapping is ideally a set-valued mapping. This is generally true in case of machine learning problems because of the non-convexity in the problem architecture. The single level reduction of the bilevel problem (\ref{mod:originalBilevel}) using (\ref{set_valued_map}) is given as follows:
\begin{align}\label{mod:reaction_set}
\minimize_{\lambda,\beta} \{F(\beta)\ :\ \beta \in \Psi(\lambda)\}
\end{align}
Studies by \cite{lorraine2018stochastic} and \cite{mackay2019self} utilize this reaction set mapping in the context of hyperparameter optimization. 
Both the studies assume (\ref{set_valued_map}) to be a single-valued mapping for their approximations, as it is mathematically and computationally hard to approximate the actual set-valued mapping $\Psi(\lambda)$.

\subsection{Single Level Reduction using First-order Conditions}
Another method to reduce a bilevel program into a single level problem is using the first-order conditions of the lower level problem. The reduced single level problem in such cases is the following constrained optimization problem:
\begin{align}\label{mod:first_order}
\minimize_{\lambda,\beta}\{F(\beta)\ :\ \nabla_{\beta} f(\lambda,\beta) = 0 \}
\end{align}
As machine learning problems are highly non-linear with multiple local optimums and saddle points, using a first-order reduction usually will not lead to the actual solution of (\ref{mod:originalBilevel}). Another problem with the above method is that solving the problem requires computation of the second-order derivatives. It is not feasible to compute second-order derivatives for large problems involving millions of model parameters that are fairly common in practice. The study by \cite{mehra2019penalty} relies on first-order conditions of the lower level for single level reduction of the original bilevel problem.

\section{Proposed Algorithm}\label{section:3}
We propose an alternative method in this section that uses the optimal value function mapping for single level reduction of the original bilevel problem. The optimal value function mapping $\phi(\lambda): \mathbb{R}^{n} \rightarrow \mathbb{R}$ for problem (\ref{mod:originalBilevel}) is given as:
\begin{align}\label{optimal_value_fun}
\phi(\lambda) = \minimize_{\beta}\{ f(\lambda,\beta) =  L(\beta; D_{tr}) + \Theta(\lambda, \beta) \}
\end{align}
The above function gives the optimal function value of the lower level problem corresponding to different values of $\lambda$. Using (\ref{optimal_value_fun}), the original problem (\ref{mod:originalBilevel}) can be expressed as follows:
\begin{align}\label{mod:bilevel_OVF}
\minimize_{\lambda,\beta}\{ F(\beta)\ :\ f(\lambda,\beta) \leq\phi(\lambda)\}
\end{align}
Note that unlike $\Psi(\lambda)$, $\phi(\lambda)$ is always a single-valued mapping, making it relatively simple to approximate. The proposed method iteratively approximates this optimal value function using Gaussian Process Regression that we discuss next.



\begin{figure*}[!t]
\centering
\subfloat[]{\includegraphics[width=3in]{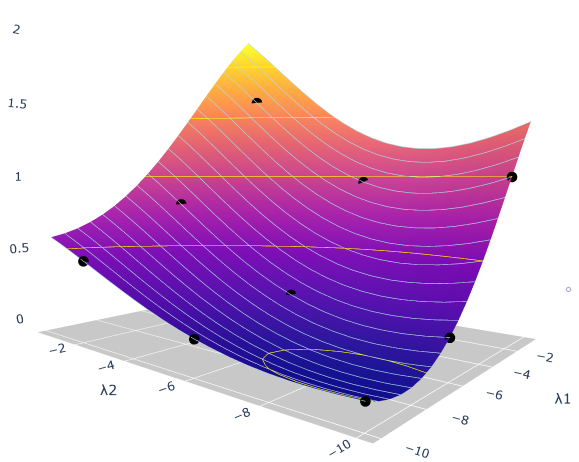}%
\label{fig:GPR0}}
\hfil
\subfloat[]{\includegraphics[width=3in]{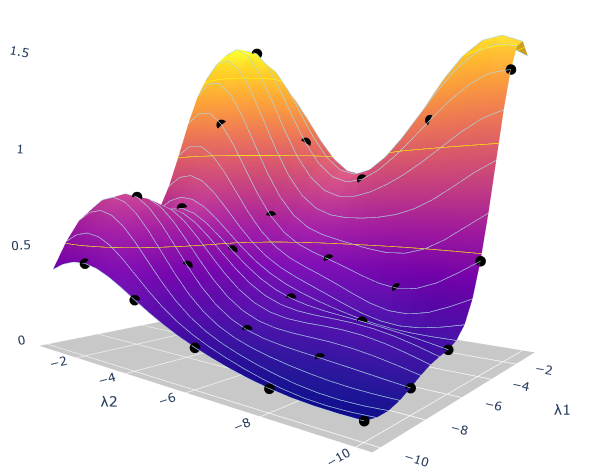}%
\label{fig_second_case}}
\caption{ $\phi$-mapping approximations using GPR for MNIST-10K in case of two hyperparameters. Here $(\lambda_1,\lambda_2)\rightarrow\phi(\lambda_1,\lambda_2)$ with $9$ and $25$ samples for (a) and (b), respectively.}
\label{fig:GPR1}
\end{figure*}

\subsection{Gaussian Process Regression}
Gaussian process regression (GPR) is used for interpolation of functions where the interpolation is governed by Gaussian processes (GP). We approximate the $\phi(\lambda)$ mapping using GPR with a sample $S = \{\lambda^{(i)}\}_{i=1}^t$ of hyperparameters in the feasible region. The lower level problem (\ref{Obj:Lower}) is solved for each $\lambda^{(i)}$ using stochastic gradient descent and the value of $\phi(\lambda^{(i)})$ is recorded. The resulting set $P = \{\lambda^{(i)}, \phi(\lambda^{(i)})\}_{i=1}^t$ is used for learning the model. 
The model structure that is used in a stochastic process is given as follows:
\begin{align}\label{kriging}
\phi(\lambda^{(i)}) = \mu + \epsilon(\lambda^{(i)}) \quad i = 1, \ldots, t,
\end{align}
where $\mu$ is the prior's expectation, and the random error $\epsilon(\lambda^{(i)})$ follows a standard normal distribution with variance $\sigma^2$. We consider the \textit{radial-basis function} (RBF) as the correlation function for the prior because of the kernel's stationarity. The RBF is given as follows:
\begin{align}\label{rbf_kernel}
\kappa(\lambda^{(i)},\lambda^{(j)}) = exp\bigg(-\frac{d(\lambda^{(i)},\lambda^{(j)})^2}{2} \bigg)
\end{align}
where $d(\lambda^{(i)},\lambda^{(j)})$ is as follows:
\begin{align}\label{ker_dist}
d(\lambda^{(i)},\lambda^{(j)}) = \sum_{k=1}^{n} \frac{\vert \lambda_{k}^{(i)}-\lambda_{k}^{(j)}\vert}{l_k}^2
\end{align}

Here, $l_k$ defines the length-scale of the respective upper level variable's dimension. The kernel defined in ($\ref{rbf_kernel}$) is a particular case of the class of correlation functions used in GP approximation. Note that the error terms have a negative correlation that decreases exponentially on increasing the distance between respective points. The optimal value of the parameters $\mu, \sigma, l_1,l_2...,l_n$ can be obtained by maximizing the log-likelihood of the sample. Figures~\ref{fig:GPR0} and~\ref{fig:GPR1} show how the GPR models change on adding more data points to initial sample $P$. For a detailed study on the stochastic process model, refer to \cite{jones1998efficient} and \cite{sacks1989design}. 

Let $\hat\phi(\lambda)$ and $\hat{s}(\lambda)$ be the approximate mapping and the standard error for $\phi(\lambda)$, respectively, obtained using GPR with the initial sample $P$, then a relaxation to the original problem (\ref{mod:bilevel_OVF}) can be written as follows:
\begin{align}\label{mod:final0}
\minimize_{\lambda,\beta}\{ F(\beta)\ :\ f(\lambda,\beta) \leq\hat{\phi}(\lambda) + z \hat{s}(\lambda)\}
\end{align}
Choosing a large enough $z$ value will ensure a lower bound to the problem ($\ref{mod:bilevel_OVF}$), which is discussed in the convergence results provided in Section~\ref{convergence_results}. When the GP model is accurate, $\hat{s}(\lambda)$ converges to zero and the optimal value function constraint in the above problem can be treated as an equality. We will reduce the problem (\ref{mod:final0}) into an unconstrained problem by using the augmented Lagrangian method, which is discussed in the next section.

\subsection{Augmented Lagrangian Formulation}
The Lagrangian method for solving optimization problems with equality constraints is widely used in the field of optimization. An extension of this method is known as the augmented Lagrangian method, which adds an extra quadratic penalty on the Lagrangian of the original problem. Without loss of generality, we will assume the constraint in problem (\ref{mod:final0}) holds with equality at the bilevel optimum. The unconstrained penalized formulation of (\ref{mod:final0}) using the augmented Lagrangian method is given as follows:
\begin{dmath}\label{eq:uncon}
\minimize_{\lambda,\beta} \mathcal{A}(\lambda,\beta) = F(\beta) + \frac{\rho}{2}\Big(\hat{\phi}(\lambda) + z\hat{s}(\lambda) - f(\lambda,\beta)\Big)^2  + \mu\Big(\hat{\phi}(\lambda)+z\hat{s}(\lambda)-f(\lambda,\beta)\Big)
\end{dmath}
The problem (\ref{eq:uncon}) is solved by optimizing the penalized loss function $\mathcal{A}$ iteratively, and updating the multipliers after each step. The update rule for $\rho$ and $\mu$ are discussed in the pseudocode of our approach in Algorithm $\ref{alg:pseudocode}$.
The unconstrained problem (\ref{eq:uncon}) can be solved using gradient descent-based methods. An optimal pair, $(\lambda^*,\beta^*)$, of upper and lower level variables is obtained after every iteration of the augmented Lagrangian method. Using this, we add a new point $(\lambda^*, \phi(\lambda^*))$ into the initial sample $P$ and re-approximate $\hat{\phi}(\lambda)$ and $\hat{s}(\lambda)$ using the updated sample. We refer to our approach as Gradient-based Bilevel Hyperparameter Optimization (GBHO) in rest of the paper.

\begin{algorithm}[H]
\caption{Pseudocode for GBHO algorithm. Note that we use $\rho^0=2$, $\mu^0=2$, $\eta=1.5$ and $z = 3$ in our experiments. GPR is an acronym for Gaussian process regression.}\label{alg:pseudocode}
\begin{algorithmic}[1]
\STATE \textbf{Require :} Training set $D_{tr}$, validation set $D_{val}$ and  initial sample of hyperparameters, $S = \{\lambda^{(i)}\}_{i=1}^{t}$.\\
\STATE \textbf{Ensure :} Optimal model parameters and hyperparameters, $i.e.\ $ $\lambda^{*}, \beta^{*}$.\\
\STATE {\textbf{for $i = 1 \cdots t$}}
        \STATE \hspace{0.5cm} $\phi(\lambda^{(i)}) \Leftarrow \minimize_{\beta}\{ L(\beta, D_{tr}) + \Theta(\beta,\lambda^{(i)})\}$
        \STATE \hspace{0.5cm} $F(\beta^{(i)}) \Leftarrow L(\beta^{(i)}, D_{val})$
\STATE \textbf{end for}
\STATE $\hat{\phi}(\lambda) \Leftarrow GPR\big( P^0 = \{\lambda^{(i)},\phi(\lambda^{(i)})\}_{i=1}^t \big)$\
\STATE Let $I^0 \Leftarrow (\lambda^{(0)},\beta^{(0)})$, $s.t.\ $ $F(\beta^{(0)}) = \minimize_i \{F(\beta^{(i)})\}_{i=1}^{t}$
\STATE {\textbf{for $j=1 \cdots N$}}
        \STATE \hspace{0.5cm} $(\lambda^{j}, \beta^{j}) \Leftarrow \argmin_{(\lambda,\beta)} \mathcal{A}(\lambda,\beta)$, with $\mu^{j-1}$ and $\rho^{j-1}$. Use $I^{j-1}$ as starting point.
        \STATE \hspace{0.5cm} $I^{j} \Leftarrow (\lambda^{j}, \beta^{j})$
        \STATE \hspace{0.5cm} $\mu^{j} \Leftarrow \mu^{j-1} + \rho^{j-1}\big(\hat{\phi}(\lambda^j)+3\hat{s}(\lambda)-f(\lambda^j,\beta^j)\big)$
        \STATE \hspace{0.5cm} $\rho^{j} \Leftarrow \eta \rho^{j-1}$
        \STATE \hspace{0.5cm} $P^j \Leftarrow   P^{j-1} \cup (\lambda^{(j)},\phi(\lambda^{(j)})$ 
        \STATE \hspace{0.5cm} $\hat{\phi}(\lambda) \Leftarrow GPR\big(P^j \big)$
\STATE \textbf{end for}
\STATE $(\lambda^{*}, \beta^{*}) \Leftarrow I^{N}$
\end{algorithmic}
\end{algorithm}

\subsection{Termination and Convergence Results}\label{convergence_results}
For the convergence proof we assume that we are working with a global optimizer that guarantees the global optimal solution whenever a single level unconstrained optimization problem is given. Under this assumption, our 
proposed method will lead to the optimal set of the regularization hyperparameters that we show using the following theorems.
\begin{theorem}\label{theorem_1}
For the hyperparameter optimization problem (\ref{mod:bilevel_OVF}), the following approximate optimization problem provides a lower bound $\minimize_{\lambda,\beta}\{ F(\beta)\ :\ f(\lambda,\beta) \leq\hat{\phi}(\lambda) + 3 \hat{s}(\lambda)\}$ with a probability $99.74\%$.
\end{theorem}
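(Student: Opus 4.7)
The proof plan is to exhibit the approximate problem as a relaxation of (\ref{mod:bilevel_OVF}) with high probability, and then invoke the elementary fact that relaxing an inequality constraint can only lower (or leave unchanged) the optimal value of a minimization problem.

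First I would set up the probabilistic model inherited from Gaussian Process Regression. By the modelling assumption in (\ref{kriging})--(\ref{rbf_kernel}), for every fixed $\lambda$ the posterior predictive distribution for $\phi(\lambda)$ is Gaussian with mean $\hat{\phi}(\lambda)$ and standard deviation $\hat{s}(\lambda)$. I would then apply the classical three-sigma rule for the normal distribution:
\begin{equation}
\Pr\bigl(\,|\phi(\lambda)-\hat{\phi}(\lambda)|\le 3\hat{s}(\lambda)\,\bigr)\;\approx\;0.9974 .
\end{equation}
In particular, dropping the lower tail gives $\Pr\bigl(\phi(\lambda)\le\hat{\phi}(\lambda)+3\hat{s}(\lambda)\bigr)\ge 0.9974$.

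Second, I would translate this pointwise confidence statement into a statement about feasible regions. Let
\[
\mathcal{C}=\{(\lambda,\beta):f(\lambda,\beta)\le\phi(\lambda)\},\qquad
\widehat{\mathcal{C}}=\{(\lambda,\beta):f(\lambda,\beta)\le\hat{\phi}(\lambda)+3\hat{s}(\lambda)\}.
\]
Applying the previous display at any point $\lambda$ of interest, the event $\phi(\lambda)\le\hat{\phi}(\lambda)+3\hat{s}(\lambda)$ forces $\mathcal{C}\subseteq\widehat{\mathcal{C}}$ along that $\lambda$-slice, because any $(\lambda,\beta)\in\mathcal{C}$ then satisfies $f(\lambda,\beta)\le\phi(\lambda)\le\hat{\phi}(\lambda)+3\hat{s}(\lambda)$. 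Consequently, minimizing $F(\beta)$ over the larger set $\widehat{\mathcal{C}}$ cannot yield a larger optimal value than minimizing over $\mathcal{C}$, giving the desired lower bound with probability at least $0.9974$.

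The main delicate point, and the place where I expect the argument to require the most care, is whether the $99.74\%$ confidence needs to hold uniformly over $\lambda$ or only at the optimizer of (\ref{mod:bilevel_OVF}). A clean way to sidestep a uniform-bound argument is to evaluate the probabilistic inequality at the specific $\lambda^\star$ attaining the optimum of (\ref{mod:bilevel_OVF}): the feasibility of $(\lambda^\star,\beta^\star)$ in $\widehat{\mathcal{C}}$ is then enough to certify that the approximate problem's optimum is no larger than $F(\beta^\star)$, and this single event occurs with the quoted probability by the three-sigma rule. I would flag the implicit assumption that the GP posterior at $\lambda^\star$ is calibrated (i.e.\ that $\phi(\lambda^\star)$ can be treated as a draw from $\mathcal{N}(\hat{\phi}(\lambda^\star),\hat{s}(\lambda^\star)^2)$), and note that replacing the constant $3$ by a general $z$ yields the obvious extension used in (\ref{mod:final0}) and referenced in Algorithm~\ref{alg:pseudocode}.
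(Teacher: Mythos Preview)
Your proposal is correct and follows essentially the same route as the paper: invoke the Gaussian posterior from the GPR model, apply the three-sigma rule to get $\phi(\lambda)\le\hat\phi(\lambda)+3\hat s(\lambda)$ with probability $\approx 0.9974$, conclude that the approximate feasible set contains the exact one, and hence the approximate minimum is a lower bound. Your discussion of the pointwise-versus-uniform issue (resolving it by evaluating at the optimizer $\lambda^\star$) is more careful than the paper's own proof, which simply asserts the relaxation without addressing uniformity.
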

\begin{proof}
Note that the Gaussian process optimization provides an approximation to $\phi(\lambda)$ with an expected value of $\hat{\phi}(\lambda)$ and standard deviation of $\hat{s}(\lambda)$ for any given $\lambda$. Under the normality assumption, $\phi(\lambda) \le \hat{\phi}(\lambda) + z \hat{s}(\lambda)$ with a probability $p(z)$, where $p(z)$ represents the $p$-value corresponding to a single-tailed test for a given $z$. For $z=3$, the value of $p(z)$ is $99.74\%$. Therefore the problem $$\minimize_{\lambda,\beta}\{ F(\beta)\ :\ f(\lambda,\beta) \leq\hat{\phi}(\lambda) + 3 \hat{s}(\lambda)\},$$ represents a relaxation to problem (\ref{mod:bilevel_OVF}). A relaxation to a given optimization problem always leads to the lower bound of the problem.
\end{proof}

\begin{theorem}\label{theorem_2}
Solving the problem (\ref{mod:final0}) iteratively using Algorithm \ref{alg:pseudocode} leads to the optimal solution to problem (\ref{mod:bilevel_OVF}) at termination when $z$ is sufficiently large.
\end{theorem}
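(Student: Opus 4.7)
The plan is to build the proof in two conceptual layers: a sandwich argument on optimal values via the GPR approximation, combined with a standard augmented-Lagrangian convergence argument. By Theorem \ref{theorem_1}, for sufficiently large $z$ the surrogate constraint $f(\lambda,\beta)\le \hat{\phi}(\lambda)+z\hat{s}(\lambda)$ is (with high probability) a relaxation of the true constraint $f(\lambda,\beta)\le \phi(\lambda)$, so the optimal value of (\ref{mod:final0}) bounds that of (\ref{mod:bilevel_OVF}) from below. The other direction, namely that the bound is asymptotically tight, is what the iterative scheme needs to deliver.

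First I would exploit a basic property of Gaussian process regression with a positive-definite RBF kernel: at any sampled point $\lambda^{(i)}\in S$ the posterior mean interpolates exactly, $\hat{\phi}(\lambda^{(i)})=\phi(\lambda^{(i)})$, and the posterior standard deviation vanishes, $\hat{s}(\lambda^{(i)})=0$. Since Algorithm \ref{alg:pseudocode} appends the pair $(\lambda^{i},\phi(\lambda^{i}))$ to $S$ after every outer iteration, the surrogate constraint becomes exact precisely at the iterates the algorithm visits. In particular, evaluating at the final iterate $(\lambda^{N},\beta^{N})$ after re-fitting, one has $\hat{\phi}(\lambda^{N})+z\hat{s}(\lambda^{N})=\phi(\lambda^{N})$, so the point is feasible for (\ref{mod:bilevel_OVF}) whenever it is feasible for (\ref{mod:final0}).

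Second I would invoke the standard convergence theory for the augmented Lagrangian method. Because Algorithm \ref{alg:pseudocode} increases $\rho^{i}=\eta\rho^{i-1}$ geometrically and updates the multiplier $\mu^i$ by the classical rule, under the assumption (stated in the paragraph preceding the theorem) that the inner unconstrained subproblems are solved to global optimality, the sequence $(\lambda^i,\beta^i)$ has the property that the constraint residual $\hat{\phi}(\lambda^i)+z\hat{s}(\lambda^i)-f(\lambda^i,\beta^i)$ is driven to zero and that limit points minimize $F$ over the feasible set of (\ref{mod:final0}). Combining this with the previous step, the limit point satisfies $f(\lambda^{\infty},\beta^{\infty})=\phi(\lambda^{\infty})$, i.e.\ $\beta^{\infty}$ is an optimal lower-level response to $\lambda^{\infty}$, and minimizes $F$ subject to this relation. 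Since (\ref{mod:final0}) is a relaxation, any minimizer of (\ref{mod:final0}) that happens to be feasible for (\ref{mod:bilevel_OVF}) is also optimal for (\ref{mod:bilevel_OVF}), which closes the sandwich and yields the result.

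The main obstacle I anticipate is the interaction between the adaptively refined surrogate and the augmented Lagrangian iterations: the constraint function itself changes with $i$, so standard convergence statements for augmented Lagrangian methods do not apply verbatim. To handle this I would argue that once the algorithm settles into a neighbourhood of a limit point $\lambda^{\infty}$, the repeated insertion of $(\lambda^i,\phi(\lambda^i))$ into $S$ forces $\hat{s}(\lambda)\to 0$ uniformly on that neighbourhood (by continuity of the GPR posterior variance in the training set), so the surrogate constraint stabilizes to the true constraint and the classical augmented Lagrangian convergence guarantee applies in the tail. A secondary, milder issue is that Theorem \ref{theorem_1} is only probabilistic; I would therefore phrase the conclusion as holding with the same high probability, or note that taking $z$ sufficiently large makes the probability of being a valid lower bound arbitrarily close to one, which is exactly the qualification in the theorem statement.
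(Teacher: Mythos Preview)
Your proposal is correct in spirit and actually more thorough than the paper's argument, but it is structured differently. The paper does not invoke GPR interpolation properties or augmented-Lagrangian convergence theory at all; instead it simply \emph{postulates} two termination conditions, C1: $\hat{s}(\lambda^\ast)\le\delta$ and C2: $|\hat{\phi}(\lambda^\ast)-f(\lambda^\ast,\beta^\ast)|\le\epsilon$, combines them by the triangle inequality to obtain $|f(\lambda^\ast,\beta^\ast)-\phi(\lambda^\ast)|\le z\delta+\epsilon$, and then appeals to Theorem~\ref{theorem_1} to say that a point which is (approximately) feasible for (\ref{mod:bilevel_OVF}) and attains the lower bound must be near optimal. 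What you do instead is supply the \emph{mechanism} behind C1 and C2: the exact-interpolation/zero-variance property of GPR at sampled iterates is precisely why C1 should eventually hold, and the classical penalty/multiplier argument is why C2 should hold. Your treatment of the moving-constraint issue and of the probabilistic caveat are points the paper glosses over entirely. In short, the paper's proof is a two-line near-feasibility-plus-lower-bound sketch conditioned on termination criteria, whereas yours is a genuine convergence argument; both reach the same conclusion, but yours explains why the algorithm reaches the termination conditions rather than assuming it does.
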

\begin{proof}
The algorithm terminates when the following two conditions are met: 
\begin{enumerate}
    \item [C1]: $\hat{s}(\lambda^\ast) \le \delta$, which implies that $\vert \phi(\lambda^\ast) - \hat{\phi}(\lambda^\ast)\vert \le z \delta$ for a sufficiently large value of $z$ (say $z \ge 3$). This ensures that $\phi(\lambda)$ is accurately approximated at $\lambda^\ast$.
    \item [C2]: $\vert \hat{\phi}(\lambda^\ast) - f(\lambda^{\ast},\beta^{\ast})\vert \le \epsilon$, which implies that $\vert f(\lambda^{\ast},\beta^{\ast}) - \phi(\lambda^{\ast})\vert \le z\delta+\epsilon$.
\end{enumerate}
From Theorem~\ref{theorem_1}, $(\lambda^\ast,\beta^\ast)$ represents the lower bound to problem~\ref{mod:bilevel_OVF} and since the constraint $f(\lambda^\ast,\beta^\ast) - \phi(\lambda^\ast) \le 0$ is satisfied with an error $z\delta+\epsilon$, $(\lambda^\ast,\beta^\ast)$ represents a near optimal solution. 
\end{proof}
The above theorem can be extended to also show that a small change in $\lambda$ over iterations of Algorithm 1 when C2 is satisfied also implies that the $\lambda$ in the current iteration is near optimal. We avoid this theorem for brevity. This ensures that the algorithm will never stagnate at a wrong solution. In case it does, the solution is guaranteed to be the optimal solution.

\section{Experiments}\label{section:4}

This section includes our experiments on MNIST \cite{lecun2010mnist} and CIFAR-10 \cite{Krizhevsky09learningmultiple} datasets. MNIST dataset consists of $28\times 28$ grayscale images of handwritten digits from $0$-$9$. For this multi-classification problem we solve multiple test instances using MLP architecture with a single hidden layer for one and two regularization hyperparameters. CIFAR-10 dataset consists of $32\times32$ colour images with 10 classes. For this classification problem, we employ LeNet-5 CNN architecture \cite{lenet} which consists of 3 convolutional layers, 2 subsampling layers and 2 fully connected layers. Here also, we optimize multiple regularization hyperparameters corresponding to different layers. For all the experiments, the discrete hyperparameters, such as learning rate and momentum, are tuned over similar grid values for fair comparison. We ran our entire experiments on \textit{Google Colaboratory Pro Plus} cloud platform.

\subsection{MNIST Dataset}



\begin{table*}
  \caption{Performance comparison of GBHO with Random search, Grid search, Bayesian Optimization and HyperBand in case of MNIST dataset. 1HP denotes single hyperparameter test instances and 2HP denotes two hyperparameter test instances. TRL, VAL and TEL denote the training loss, validation loss and testing loss, respectively. LLO denotes the number of lower level optimization problems solved by the corresponding method. In case of GBHO, (+5) indicates the number of unconstrained optimization calls needed to solve the problem (\ref{eq:uncon}).}
  \label{sample-table}
  \begin{tabularx}{\textwidth}{X c | X X X | X X X }  
    \toprule
     &  & \multicolumn{3}{c}{\textbf{MNIST-1HP}} &  \multicolumn{3}{c}{\textbf{MNIST-2HP}}  \\
    \cmidrule(r){3-8}
    \textbf{\textbf{Methods}} &   & $\mathbf{1000}$ & $\mathbf{5000}$ & $\mathbf{10000}$ & $\mathbf{1000}$ & $\mathbf{5000}$ & $\mathbf{10000}$ \\
    \midrule
     \multirow{5}{4em}{\textbf{Grid Search}} & $TRL$ & 0.0216 & 0.0321 & 0.0302 & 0.0332 & 0.0685 & 0.0300\\ 
  & $VAL$ & 0.4395 & 0.3045 & 0.1524 & 0.4327 & 0.2757 & 0.1462\\
  & $TEL$ & 0.4125 & 0.2714 & 0.1942 & 0.4193 & 0.2556 & 0.1909\\
  & $\lambda$ & -6.06 & -6.36 & -6.77 & -0.54, -4.78 & -0.68, -1.03 & -0.68, -6.89\\
  & $LLO$ & 100 & 100 & 100 & 900 & 900 & 900\\
    \midrule
    \multirow{5}{4em}{\textbf{Random Search}} & $TRL$ &  0.0258 & 0.0534 & 0.0268 & 0.0204 & 0.0437 & 0.0328\\
  & $VAL$ & 0.4413 & 0.3145 & 0.1532 & 0.4403 & 0.2687 & 0.1438 \\
  & $TEL$ & 0.4177 & 0.2740 & 0.1963 & 0.4227 & 0.2419 & 0.1830\\
  & $\lambda$ & -5.85 & -5.88 & -6.90 & 0.0, -3.44 & -0.30, -2.11 & -0.23, -5.08\\
  & $LLO$ & 100 & 100 & 100 & 900 & 900 & 900\\
  \midrule
  \multirow{5}{4em}{\textbf{Bayesian Search}} & $TRL$ &  0.0379 & 0.0243 & 0.0059 & 0.0420 & 0.0218 & 0.0088\\ 
  & $VAL$ & 0.4396 & 0.2680 & 0.1318 & 0.4288 & 0.2558 & 0.1331\\
  & $TEL$ & 0.4278 & 0.2434 & 0.1824 & 0.4057 & 0.2360 & 0.1788\\
  & $\lambda$ &  -10.00 & -5.88 & -7.92 & -9.24,-3.55 & -6.57, -6.36 & -6.92, -10.00\\
  & $LLO$ & 60 & 60 & 60 & 100 & 100 & 100\\
  \midrule
  \multirow{5}{4em}{\textbf{HyperBand}} & TRL & 0.0277 & 0.0164 & 0.0033 & 0.0356 & 0.0100 & 0.0255 \\ 
  & $VAL$ &  0.4335 & 0.2637 & 0.1391 & 0.4236 & 0.2640 & 0.1336\\
  & $TEL$ & 0.4224 & 0.2436 & 0.1916 & 0.4063 & 0.2444 & 0.1717\\
  & $\lambda$ &  -5.43 & -6.48 & -8.61 & -5.3, -6.05 & -7.75, -7.14 & -6.68, -6.89\\
  & $LLO$ & 254 & 254 & 254 & 254 & 254 & 254 \\
  \midrule
  \multirow{5}{4em}{\textbf{GBHO}} & TRL & 0.0210 & 0.0483 & 0.0467 & 0.0298 & 0.0332 & 0.0114 \\ 
  & $VAL$ &  0.0279 & 0.0133 & 0.0281 & 0.0457 & 0.0745 & 0.0980\\
  & $TEL$ & 0.3734 & 0.2383 & 0.1765 & 0.3652 & 0.2035 & 0.1711 \\
  & $\lambda$ &  -6.61 & -2.75 & -3.95 & -0.11, -0.52 & -2.59, -1.35 & -10.26, 0.316\\
  & $LLO$ & 10 (+5) & 10 (+5) & 10 (+5) & 50 (+5) & 50 (+5) & 50 (+5) \\
  \bottomrule
  \end{tabularx}
  \label{table:MNIST}
\end{table*}

For MNIST dataset, we use the MLP architecture with one hidden layer consisting of 100 nodes. We perform our study on three instances consisting of 1000, 5000 and 10000 examples which we sample randomly from the original MNIST dataset. We split the instances into 60\% training and 40\% validation examples for the experiments. The test set used for the comparison consists of 10000 data points sampled separately for each of the cases. We solve two hyperparameter optimization problems for each set. For the first experiment, we regularize weights of both the hidden and output layer using a single hyperparameter. For the second experiment, we use two separate regularization hyperparameters to regularize each of the layers separately. The results of the experiments are reported in Table $ \ref{table:MNIST}$. For the single hyperparameter case, we sample 10 different $\lambda$ points from the interval $[-10,0]$ in order to approximate the $\phi$-mapping initially. We take $\exp(\lambda)$ as the regularization hyperparameter such that $\exp(\lambda)\geq 0$. Similarly, for the two hyperparameters we sample $25$ different $\lambda$ points to approximate the $\phi$-mapping initially.

\begin{figure*}[!t]
\centering
\subfloat[]{\includegraphics[width=3in]{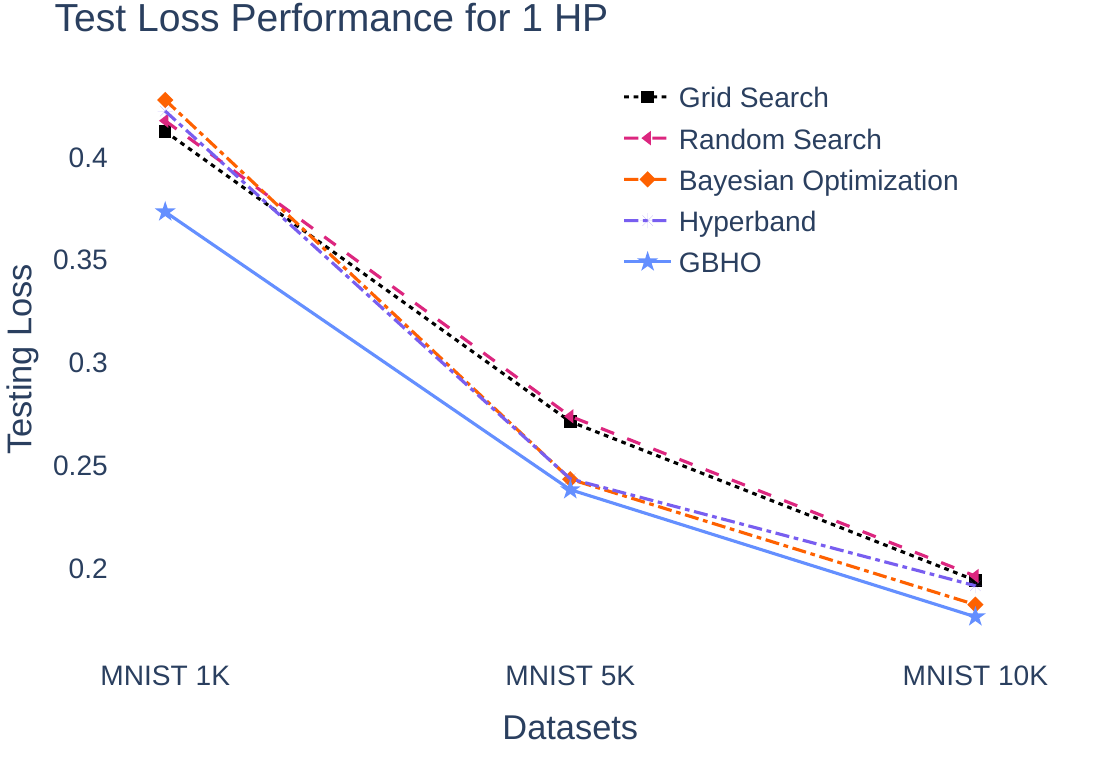}%
\label{fig:comparison_1hp}}
\hfil
\subfloat[]{\includegraphics[width=3in]{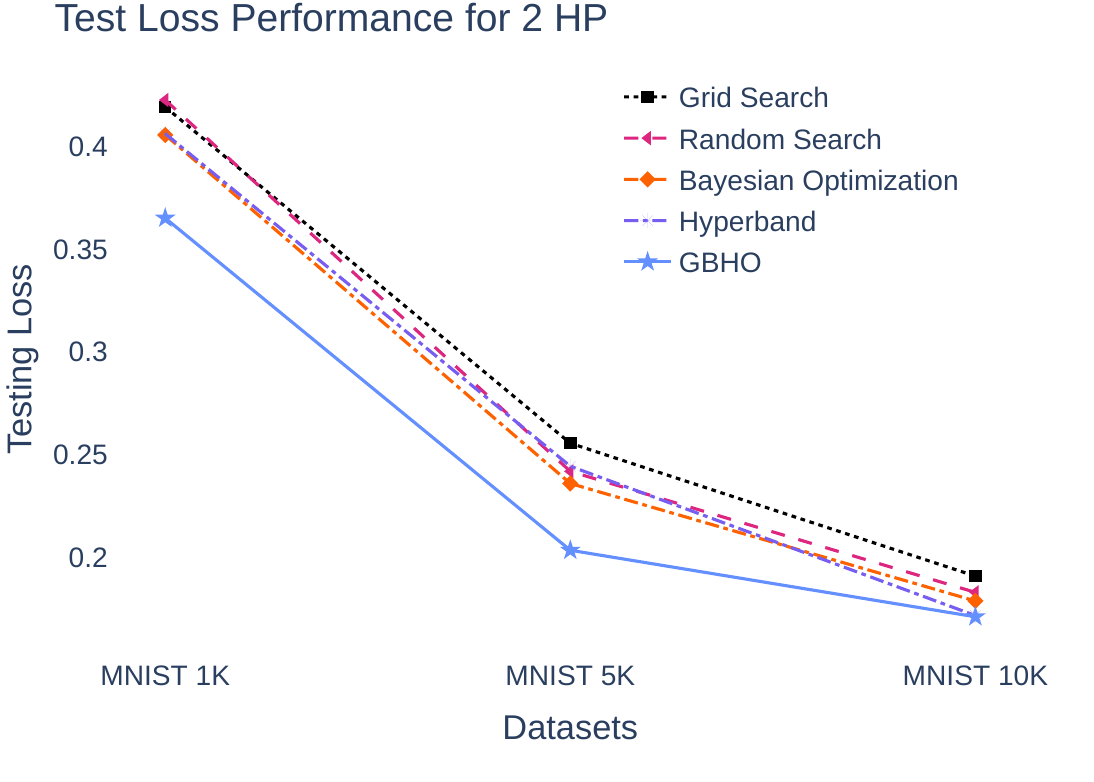}%
\label{fig:comparison_2hp}}
\caption{ Comparison of test set loss for random search, grid search, Bayesian optimization, Hyperband algorithm and GBHO for MNIST-1HP, (a), and MNIST-2HP, (b), cases.}
\label{fig:comparison}
\end{figure*}

Figures \ref{fig:comparison_1hp} and \ref{fig:comparison_2hp} compare the test set loss for the single and the double hyperparameter cases. We provide comparison of our method with random search, grid search, Bayesian optimization \cite{omalley2019kerastuner} and Hyperband algorithm \cite{hyperband}. We can clearly see from the figures that our method outperforms all the other techniques in terms of the test loss for all the cases. Table \ref{table:MNIST} contains a detailed comparison of our method with other techniques where we provide the training, validation and testing losses for the best model obtained from each method. The computational expense of each method is measured by counting the number of lower level optimization tasks executed by each method. Our method is computationally much faster as compared to other methods as it requires significantly lower number of lower level optimization calls.

\subsection{CIFAR-10 Dataset}
For CIFAR-10 dataset, we use LeNet-5 CNN architecture for our experiments, and once again compare with random search, grid search, Bayesian optimization and HyperBand method. We perform our study on a small batch of 1000 data points with $60\%$ training set and $40\%$ validation test to allow for overfitting. The results are compared on a testing dataset of $10000$ points which are sampled separately. We use two and four regularization hyperparameters for our test instances. In the case of two hyperparameters, we regularize the two convolutional layers and the three dense layers with each hyperparameter, while in the case of four hyperparameters, we regularize the two convolutional layers with one hyperparameter and each of the three dense layers with three separate hyperparameters. The results obtained are similar as in the context of MNIST dataset, where we observe that the proposed method is able to find the best testing loss with the least computational requirements. The test loss results have been reported through Figure~\ref{fig:CIFAR}. The lower level optimization calls for grid search, random search, Bayesian search, HyperBand method and GBHO are 100, 100, 100, 254, and 25 (+5), respectively, for the 2HP case. For the 4HP case, the lower level optimization calls for grid search, random search, Bayesian search, HyperBand method and GBHO are 625, 625, 100, 254, and 81 (+5), respectively. Interestingly, the results on test data deteriorate for the 4HP case as compared to the 2HP case for Bayesian optimization, HyperBand as well as GBHO, the reason for which could be overfitting on the validation set with increasing hyperparameters. In the case of MNIST dataset while moving from 1HP to 2HP we observed some benefits.

\begin{figure}[!t]
\centering
\includegraphics[width=3.4in]{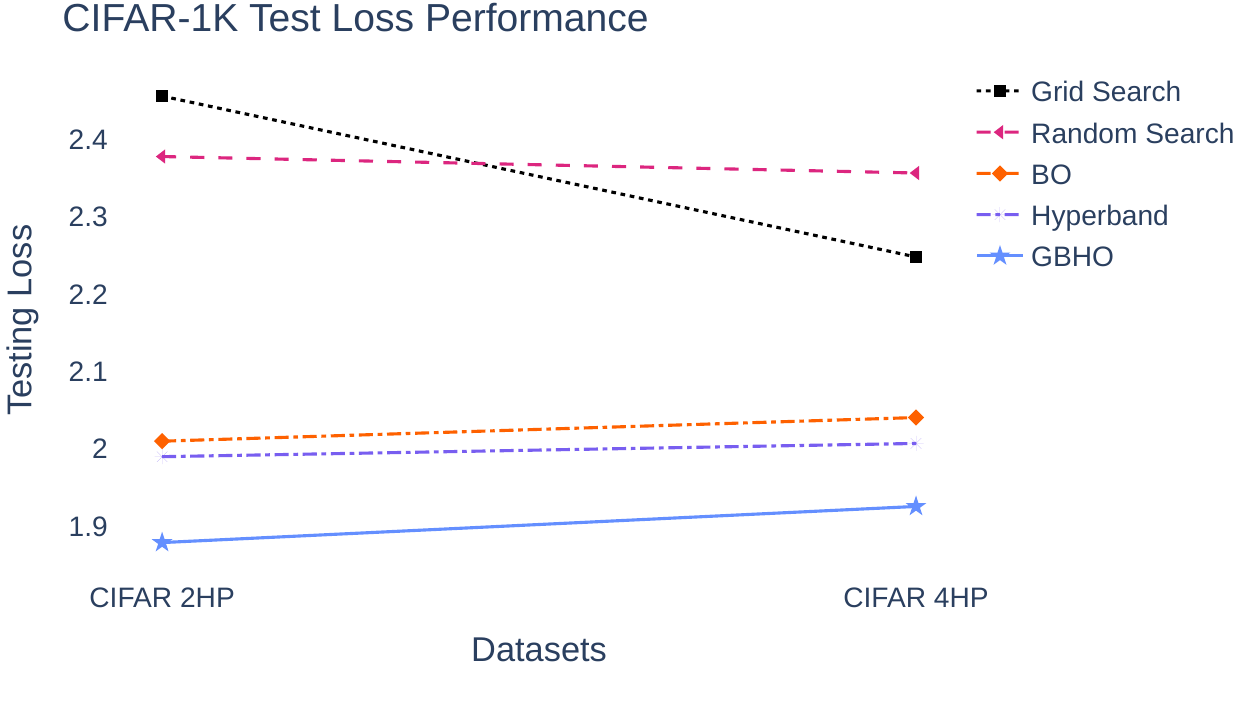}
\caption{Test set loss for CIFAR-10. BO present in the figure legends denotes Bayesian Optimization.}
\label{fig:CIFAR}
\end{figure}

\section{Conclusions}\label{section:5}

In this paper, we have proposed a Gradient-based Bilevel Hyperparameter Optimization (GBHO) method for optimizing continuous hyperparameters in machine learning models. We have performed our study in the context of regularization hyperparameters and have shown that the proposed method is able to optimize multiple hyperparameters with low computational requirements. The method formulates the hyperparameter optimization problem as a bilevel optimization task and then solves it by using the augmented Lagrangian method after reducing it to single level using the optimal value function approach. An extensive computational study with MNIST dataset on MLP architecture and CIFAR-10 dataset on CNN architecture shows that the method is able to optimize multiple hyperparameters with low computational requirements as compared to other techniques, such as, grid search, random search, Bayesian optimization and HyperBand method. The test loss of the models generated using the proposed algorithm also turns out to be better than the competing approaches.

\section{REFERENCES}

\bibliographystyle{IEEEtran}
\bibliography{bibliography}


 





\end{document}